%% file: main.tex
\ifdefined\pdftexversion\pdfoutput=1\fi
\documentclass[10pt]{article}

\usepackage{iftex}
\ifPDFTeX
  \usepackage[T1]{fontenc}
\fi
\usepackage[letterpaper,margin=1in]{geometry}
\usepackage{libertine}
\usepackage{amsthm}
\usepackage[libertine]{newtxmath}
\ifXeTeX
  \DeclareSymbolFont{operators}{OT1}{LinuxLibertineT-TLF}{m}{n}
  \SetSymbolFont{operators}{bold}{OT1}{LinuxLibertineT-TLF}{b}{n}
\fi
\usepackage{microtype}
\usepackage{graphicx}
\usepackage{booktabs}
\usepackage{siunitx}
\usepackage{tabularx}
\usepackage{xltabular}
\usepackage{multirow}
\usepackage{makecell}
\usepackage{amsmath}
\usepackage{xcolor}
\usepackage{tikz}
\usetikzlibrary{arrows.meta,positioning,fit,backgrounds,calc,shapes.geometric}
\usepackage{enumitem}
\usepackage{listings}
\usepackage{float}
\usepackage{placeins}
\usepackage{algorithm}
\usepackage[noend]{algpseudocode}
\usepackage[labelfont=bf,font=small]{caption}
\usepackage[numbers,square,sort&compress]{natbib}
\usepackage[colorlinks=true,linkcolor=blue!55!black,citecolor=blue!55!black,urlcolor=blue!55!black]{hyperref}
\usepackage[capitalise,noabbrev]{cleveref}
\hypersetup{pdftitle={Mnemon: Raw Records, Fast Judgments, Slow Thoughts}, pdfauthor={Guangren Wang}}
\newtheorem{proposition}{Proposition}
\crefname{proposition}{Proposition}{Propositions}

\definecolor{ours}{HTML}{2a78d6}
\definecolor{other}{HTML}{eb6834}
\definecolor{ink2}{HTML}{52514e}
\definecolor{rule}{HTML}{c3c2b7}
\definecolor{fillA}{HTML}{e9f1fb}
\definecolor{fillB}{HTML}{fdeee7}
\definecolor{fillC}{HTML}{e6f6ef}

\newcommand{\sys}{Mnemon}

\newcommand{\jev}{Jev}
\newcommand{\lme}{Long\-Mem\-Eval-S}

\setlist{itemsep=1pt,topsep=3pt}
\algrenewcommand\algorithmicrequire{\textbf{Input:}}
\crefname{algorithm}{Algorithm}{Algorithms}
\crefname{equation}{Equation}{Equations}
\crefformat{equation}{Equation~#2#1#3}

\input{tables/numbers}
\newenvironment{gentable}[1][t]{\begin{table}[#1]}{\end{table}}

\title{\sys: Raw Records, Fast Judgments, Slow Thoughts}
\author{Guangren Wang}
\date{}

\begin{document}
\maketitle

\input{sections/abstract}
\input{sections/introduction}
\input{sections/related}
\input{sections/setting}
\input{sections/system}
\input{sections/implementation}
\input{sections/evaluation}
\input{sections/protocol}
\input{sections/results}
\input{sections/analysis}
\input{sections/limitations}
\input{sections/conclusion}

\bibliographystyle{plainnat}
\bibliography{references}


\end{document}

%% file: tables/numbers.tex
\newcommand{\haluSystems}{13}

\newcommand{\beamSystems}{12}
\newcommand{\finLoCoMo}{91.7}

\newcommand{\finCtxLoCoMo}{3.8k}

\newcommand{\consLoCoMoAbs}{2.5}

\newcommand{\writeLoCoMo}{0.013}
\newcommand{\dsFinLoCoMo}{92.2}
\newcommand{\dsFinLoCoMoMini}{92.8}

\newcommand{\finEciLoCoMo}{0.259}
\newcommand{\eciOtherLoCoMo}{0.337}
\newcommand{\eciOtherNameLoCoMo}{mem9}
\newcommand{\dsConsLoCoMoAbs}{1.9}
\newcommand{\dsCostLoCoMo}{2.38}
\newcommand{\finLME}{83.8}

\newcommand{\consLMEAbs}{4.4}

\newcommand{\dsFinLME}{94.4}
\newcommand{\dsFinLMEMini}{93.4}

\newcommand{\finEciLME}{0.198}
\newcommand{\eciOtherLME}{0.147}
\newcommand{\eciOtherNameLME}{MemOS}
\newcommand{\dsConsLMEAbs}{4.0}
\newcommand{\dsCostLME}{2.53}
\newcommand{\dsFinLoCoMoRev}{95.3}
\newcommand{\jevTokRange}{35--73k}
\newcommand{\jevRatioRange}{9--19}
\newcommand{\readerGainLME}{9.6}
\newcommand{\readerGainLMECI}{+6.3 to +12.9}

\newcommand{\consBeamAbs}{6.2}

\newcommand{\consCostRange}{1.17--1.37}

\newcommand{\finHaluRankWord}{eighth}
\newcommand{\finBeamRankWord}{tenth}

\newcommand{\beamTenC}{51.2}

\newcommand{\beamTenCostRatioC}{1.11}

\newcommand{\consBeamTenAbs}{5.9}
\newcommand{\beamTenBatches}{6,095}

\newcommand{\workJevCalls}{5--10}
\newcommand{\workJevWaves}{4--7}

\newcommand{\workReplan}{6--31}

\newcommand{\workJevTime}{1.4--2.4}
\newcommand{\workSearchSmall}{14--18}
\newcommand{\workReadSmall}{0.1--0.2}
\newcommand{\workSearchTen}{248}
\newcommand{\workReadTen}{3.5}
\newcommand{\workTenRecords}{108,810}
\newcommand{\workWave}{0.34}
\newcommand{\workMeasured}{10--15}
\newcommand{\jmAcc}{84.4}
\newcommand{\jmAccDS}{82.1}
\newcommand{\jmDiff}{7.3}
\newcommand{\jmCI}{+5.5 to +9.2}
\newcommand{\jmUp}{166}
\newcommand{\jmDown}{53}

\newcommand{\jmMultiHop}{14.2}
\newcommand{\jmTemporal}{8.4}
\newcommand{\jmCtx}{2.6k}
\newcommand{\jmEci}{0.275}
\newcommand{\jmLabels}{84.0}
\newcommand{\jmWrite}{0.12}
\newcommand{\jmWriteRatio}{9.5}

\newcommand{\jmJevCalls}{4.1}

%% file: sections/abstract.tex
\begin{abstract}
Long-term memory lets an LLM assistant use a history it can no longer reread, and most memory systems build it by rewriting conversations into facts, graphs or typed memories at write time.
We argue that the work of memory divides, as thinking does, into two systems.
Most of it is fast System~1 work: many small, independent yes/no judgments about records, such as whether a record is needed or no longer current, which a decision model makes by the dozen in a third of a second.
Only a little is slow System~2 work: writing a few search queries, naming what the reply needs and composing the answer, which an LLM does well but slowly.
We present \sys, a memory agent built on this division.
It keeps conversations as raw, dated records; an LLM (System~2) plans searches over them, a decision model, \jev{} (System~1), judges what the searches return, and rules with explicit budgets turn the judgments into a small View for an unchanged answering model.
A background pass consolidates each record once into topic timelines, value histories and standing instructions linked to the records, so that questions about a whole conversation reach evidence their own searches miss.
Because nothing is decided about a record when it is written, the same agent can read any store that returns dated records.

With gpt-4.1-mini answering, as in a public re-evaluation of 14 systems, \sys{} scores \finLoCoMo\% on LoCoMo, the highest among them, and \finLME\% on \lme, from under 4k tokens of context per question, with the lowest effective cost index on LoCoMo.
With a reasoning model answering, it reaches \dsFinLoCoMo\% on LoCoMo and \dsFinLME\% on \lme, the latter on par with the best published results.
From 100K to 10M tokens of history on BEAM, its cost per question grows by a factor of \beamTenCostRatioC.
On the same records, \jev{} separates gold evidence better than two LLMs and is 3--11 times faster.
\end{abstract}

%% file: sections/introduction.tex
\section{Introduction}
\label{sec:intro}

An assistant that talks with the same person for months accumulates more history than it can reread at every turn.
Rereading everything grows more expensive with every conversation and stops being possible once the history outgrows the context window.
Long-term memory systems therefore decide which parts of the history the assistant sees, and most of them decide it when the history is written: Mem0 extracts and updates facts~\citep{chhikara2025mem0}, Zep builds a temporal knowledge graph~\citep{rasmussen2025zep}, and MemOS, EverMemOS, Nemori and MIRIX organize conversations into memory units, episodes or typed stores~\citep{li2025memos,evermemos2026,nan2025nemori,wang2025mirix}.

\paragraph{The cost of write-time extraction.}
Rewriting a conversation at write time pays for understanding before the question is known.
Every message is processed whether or not it is ever asked about, and what the extractor drops or distorts cannot be recovered when a question finally reveals what mattered.
It also moves judgment away from the moment with the most information for it: the question, the recent dialogue and the candidate records are known together only when the question is asked.
And it ties memory to a schema: the extractor decides in advance what counts as a fact, an entity or a preference, so each new kind of data, such as documents, tasks or logs, needs a new extraction schema.

\paragraph{Fast judgments, slow thoughts.}
Dual-process accounts of thinking separate a fast, automatic System~1 from a slow, deliberate System~2~\citep{kahneman2011thinking}.
The work of memory divides the same way.
Most of it is System~1 work: small, independent yes/no judgments with explicit criteria, such as whether the reply should use this record, whether it is no longer current or whether it gives the second of the two dates the question needs.
Decision models such as \jev{} make such judgments by the dozen in a third of a second~\citep{jev2026}.
Only a little is System~2 work: writing a few search queries, naming what the reply needs and composing the answer from what is shown, which an LLM does well but slowly.
Because the judging is fast, \sys{} can read raw, dated records when a question arrives instead of rewriting them in advance: an LLM plans (System~2), \jev{} judges (System~1), and rules with explicit budgets decide which searches to page or rewrite and what to show.
Slow work that no reply can wait for runs in the background: for evidence that no question points to, such as an instruction given once, every mention of a topic or a value that changed, \sys{} consolidates each record once into an index of topic timelines, value histories and standing instructions that links back to the records.
The index directs reading, and the records remain the evidence.

\paragraph{Memory without a write-time schema.}
Deferring interpretation to read time also frees memory from the structure of its store.
Nothing about a record is decided when it is written, so \sys{} needs from a store only a search route that returns dated records: a conversation journal, a collection of documents or a task list can feed the same agent without a new extractor.
The consolidated index sits on top of the records and points back to them, so it never becomes the store's schema.
Because nothing stored has to be rewritten, the same records also serve longer histories, stronger answering models and new ways of reading them.
Memory of this kind can be added wherever records can be searched, which makes it a general way to give agents a past beyond conversation; we evaluate it on conversational memory, the setting with public benchmarks.

\paragraph{Our contributions.}
\sys{} runs as a \emph{replica}, a second instance of an open-source agent harness~\citep{dsh2026}, and hands the unchanged main agent one View per turn.
Our contributions are:
\begin{itemize}
  \item \textbf{Memory as System~1 and System~2} (\cref{sec:setting,sec:design,sec:impl}): a memory agent that gives judging to a decision model and planning and answering to an LLM, connected by rules under explicit budgets that use only the order and the yes/no of judgments.
  \item \textbf{Memory without a write-time schema} (\cref{sec:setting,sec:memory,sec:jevmem}): raw records about which nothing is decided until a question arrives, which no write-time processing can improve on in information (\cref{eq:dpi}), and a consolidated index that points back to them, so that the agent reads any store that returns dated records. Under one protocol, \sys{} is \jmDiff{} points more accurate than a concurrent system that uses the same decision model to organize memory at write time.
  \item \textbf{Accuracy at small context} (\cref{sec:main}): compared with the 14 memory systems that OmniMemEval re-evaluated with gpt-4.1-mini answering~\citep{omnimemeval2026}, \sys{} is the most accurate on LoCoMo (\finLoCoMo\%) and second on \lme{} (\finLME\%), from under 4k tokens of context per question, with the lowest effective cost index on LoCoMo.
  \item \textbf{A stronger System~2, bounded cost} (\cref{sec:reasoning,sec:scale}): with a reasoning model as System~2, \sys{} reaches \dsFinLoCoMo\% on LoCoMo and \dsFinLME\% on \lme, on par with the best published results on \lme; from BEAM-100K to BEAM-10M, with 80 times as many records, its cost per question grows by a factor of \beamTenCostRatioC.
  \item \textbf{Judging belongs to System~1} (\cref{sec:system1}): on the same records, \jev{} separates gold evidence better than two LLMs (AUC 0.942 against 0.900 and 0.853) and is 3--11 times faster.
\end{itemize}
\Cref{fig:tradeoff} summarizes the comparison on the two most widely reported benchmarks: \sys{} is the only system above 80\% on both that sends the answering model fewer than 4k tokens per question.

\begin{figure}[t]
\centering
\includegraphics[width=\linewidth]{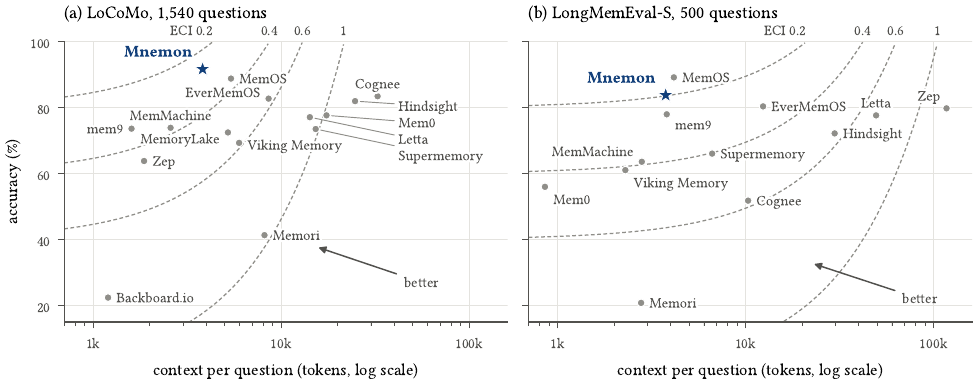}
\caption{Accuracy against context per question for \sys{} (star) and the 14 systems re-evaluated by OmniMemEval, all with gpt-4.1-mini answering. Dashed lines join points of equal effective cost index (\cref{eq:eci}); up and to the left is better.}
\label{fig:tradeoff}
\end{figure}

%% file: sections/related.tex
\section{Related Work}
\label{sec:related}

\paragraph{Memory for LLM agents.}
MemGPT, now Letta, pages information in and out of the context window~\citep{packer2023memgpt,letta2024}.
Most later systems work at write time: they extract facts~\citep{chhikara2025mem0}, build temporal knowledge graphs~\citep{rasmussen2025zep} or linked notes~\citep{xu2025amem}, organize memory into tiers, cubes or typed stores~\citep{kang2025memoryos,li2025memos,wang2025mirix}, consolidate conversations into episodes~\citep{evermemos2026,nan2025nemori}, or compress and restructure what they extract~\citep{fang2025lightmem,simplemem2026,premem2025,hmem2026,hymem2026,leanmem2026,lychee2026,emem2025,swiftmem2026,memori2026,hindsight2025}.
MemMachine keeps raw episodes indexed by sentence~\citep{memmachine2026}, and SmartSearch reranks raw history~\citep{smartsearch2026}.
\sys{} keeps raw records as the only evidence, reasons at read time, and consolidates only into an index that points back to the records.

\paragraph{Retrieval and judgment.}
\sys{} retrieves with BM25~\citep{robertson2009bm25} and dense embeddings~\citep{nussbaum2024nomic}, fused by reciprocal rank fusion (RRF)~\citep{cormack2009rrf}, and with hypothetical-document queries~\citep{gao2023hyde}, as in retrieval-augmented generation~\citep{lewis2020rag}.
\jev{} acts as a reranker~\citep{nogueira2019passage,sun2023rankgpt} whose judgments answer explicit propositions and drive actions, as in ReAct and Self-RAG~\citep{yao2023react,asai2024selfrag}.

\paragraph{Fast and slow thinking.}
Dual-process accounts separate fast, automatic judgment from slow deliberation~\citep{kahneman2011thinking}, and AI systems have borrowed the division to pair fast and slow components~\citep{booch2021thinking}, for instance a small, fast action model with an LLM planner in interactive agents~\citep{lin2023swiftsage}; LLM cascades route easy queries to cheap models and hard ones to expensive models~\citep{chen2023frugalgpt}.
Memory systems have recently adopted the division as well: D-Mem falls back from vector retrieval to exhaustive LLM reading~\citep{you2026dmem}, DCPM and Engram pair a fast write path with slow consolidation into schemas or a bi-temporal knowledge graph~\citep{fei2026dcpm,wang2026engram}, and Jev-Mem uses \jev{} to type and link each turn at write time and to steer retrieval over the resulting graph~\citep{jiang2026jevmem}.
\sys{} applies the division on the read path instead: System~1 judges raw records once a question is known, System~2 plans the searches and composes the answer, and nothing is decided when a record is written.

\paragraph{Evaluating memory.}
LoCoMo~\citep{maharana2024locomo} and \lme~\citep{wu2025longmemeval} are the standard benchmarks for long conversations; BEAM reaches 10M tokens~\citep{beam2026}, and HaluMem measures hallucination~\citep{halumem2025}.
Protocols and LLM graders~\citep{zheng2023judging} vary between papers, and several self-reported scores fell by 10--40 points when OmniMemEval re-evaluated 14 systems under one protocol~\citep{omnimemeval2026}.
We therefore follow that protocol and list published claims separately.

%% file: sections/setting.tex
\section{Problem Setting}
\label{sec:setting}

We consider an assistant, the \emph{main agent}, that converses with one user over many sessions.
Its \emph{history} is a sequence of sessions, each a dated sequence of messages.
At each user turn a \emph{memory agent} may read the history, and it publishes a \emph{View}: a bounded block of context that the main agent receives with the message before it replies.
The main agent is otherwise unchanged, and the memory agent never edits its reply.
In the benchmarks we use, a question is asked in a fresh session after its history has been written, so the answer depends on the View alone.

\paragraph{Formal model.}
We write the history as a sequence of dated records $H = (r_1, \ldots, r_n)$, each a timestamp and a short span of dialogue.
At a user turn, let $M$ be the recent dialogue, ending with the user's message, and let $Y$ be what the reply should convey; $Y$ depends on the message, which is not known when the history is written.
A memory agent maps $M$ and $H$ to a View $V = F(M, H)$ within a budget $B$ on its length, and the main agent replies from $M$ and $V$.
A \emph{write-time} memory first applies a transformation $\phi$ fixed before any question, such as extraction into facts or a graph, and reads only its output: $V = F\bigl(M, \phi(H)\bigr)$.
Since $\phi(H)$ is a function of $H$, the variables form a Markov chain $Y \to (M, H) \to \bigl(M, \phi(H)\bigr)$, and the data processing inequality~\citep{cover2006elements} gives
\begin{equation}
\label{eq:dpi}
I\bigl(Y;\, M, \phi(H)\bigr) \le I\bigl(Y;\, M, H\bigr),
\end{equation}
with equality only if $\phi(H)$ retains all that $H$ says about the answer to every question that may be asked.
Processing at write time can therefore lose information about questions not yet asked, and it cannot add any.
\Cref{eq:dpi} does not make raw records sufficient: the View must still fit $B$, and choosing it is the read-time work that \sys{} divides between two systems.
An index $X = \psi(H)$ kept beside the records, rather than in their place, loses nothing, since $I(Y; M, H, X) = I(Y; M, H)$; it can only make the relevant records easier to find.

\paragraph{Two kinds of work.}
Following dual-process accounts of thinking~\citep{kahneman2011thinking}, we divide the read-time work into two kinds.
\emph{System~1 work} consists of \emph{judgments}.
A judgment evaluates a yes/no proposition $\pi$ with an explicit criterion about one item $x$, a record or an index item, in a shared state $c$, such as the recent dialogue and the needs under consideration, and returns the probability $p_\pi(x \mid c) \in [0, 1]$ that $\pi$ holds: does the reply need this record, is it no longer current, does it satisfy this need.
Judgments on one state that take no other judgment's result as input form a \emph{wave}, which a decision model answers in parallel batches, dozens of judgments in a fraction of a second; the time a turn spends on System~1 work therefore grows with its number of waves, not of judgments.
\emph{System~2 work} is open-ended generation $g(c)$ that does not decompose into such propositions: writing queries for records not yet seen, naming what a reply needs, and composing an answer that counts, compares dates or chains facts.
It suits an LLM, which does it well but slowly, so a memory agent should keep as little of it on the read path as it can and move the rest to the background.
We borrow the division of labor, not the view that fast judgment is error-prone: our judgments have explicit criteria, and a model built for them makes them well (\cref{sec:system1}).

\paragraph{Harness.}
We build the memory agent in DeepSeek Harness (DSH), an open-source agent harness in which every capability is a plugin~\citep{dsh2026}.
Its dsh-mnemon suite provides memory through \emph{Sources}, which expose stores such as a journal of records through read and search routes, and \emph{Strategies}, which decide what the harness publishes into the agent's context as one View per turn~\citep{mnemon2026code}.
\sys{} is agnostic to the structure of a store: it needs from a Source only a search route that returns dated records, so a store of documents, tasks or past sessions can feed the same View as the conversation journal.
In this paper all memory lives in one Source, the journal.

\paragraph{Cost measure.}
A memory system incurs cost at three points: when it processes the history (write time), when it decides what to show (read time) and when the answering model reads what it was shown (answer time).
Only the last is reported for every system in public re-evaluations, as the number of tokens of context sent to the answering model per question~\citep{omnimemeval2026}.
Accuracy per unit of cost rewards systems that are cheap and often wrong, so we price errors instead.
If a wrong answer is repaired by one full-context answer of cost $c_\text{full}$, a system with accuracy $a$ that sends $c$ tokens of context per question has the expected cost
\begin{equation}
\label{eq:eci}
\mathrm{ECI} = (1-a) + \frac{c}{c_\text{full}}
\end{equation}
in units of $c_\text{full}$, which we call its \emph{effective cost index}.
Lower is better, and answering from the full history costs at least one.
Because read-time and write-time costs are not published for other systems, we compare costs across systems by this index alone and report our other costs separately.

%% file: sections/system.tex
\section{Design}
\label{sec:design}

\begin{figure}[t]
\centering
\begin{tikzpicture}[
  font=\footnotesize,
  box/.style={draw=ink2, rounded corners=2pt, line width=0.5pt, align=center, inner sep=3pt, text width=2.55cm, minimum height=1.2cm, fill=white},
  wide/.style={text width=2.75cm},
  llm/.style={box, fill=fillB, draw=other, line width=0.8pt},
  sone/.style={box, fill=fillA, draw=ours, line width=0.8pt},
  store/.style={box, fill=fillC},
  io/.style={align=center, text width=2.0cm, minimum height=0.6cm, font=\footnotesize\itshape},
  flow/.style={-{Stealth[length=4.5pt]}, line width=0.6pt, draw=ink2},
  lab/.style={font=\scriptsize, text=ink2},
]
\node[io] (user) at (0,0) {user message};
\node[llm, text width=2.0cm, minimum height=1.0cm] (main) at (0,-2.7) {main agent\\(answering LLM)};
\node[io] (reply) at (0,-4.1) {reply};
\draw[flow] (user) -- (main);
\draw[flow] (main) -- (reply);
\node[llm, wide] (plan) at (3.62,0) {\textbf{plan} (System~2)\\five searches,\\$\le$3 needs};
\node[store] (retrieve) at (7.0,0) {\textbf{retrieve}\\BM25 + embeddings,\\RRF; pool of 48};
\node[sone, wide] (screen) at (10.38,0) {\textbf{screen} (System~1)\\``should the reply\\use this record?''};
\node[sone, wide] (judge) at (10.38,-2.7) {\textbf{judge} (System~1)\\needed? no longer\\current? which need?};
\node[box] (loop) at (7.0,-2.7) {\textbf{loop} ($\le$3 rounds)\\open needs: page,\\rewrite, or stop};
\node[box, wide] (compose) at (3.62,-2.7) {\textbf{compose View}\\index items, then\\$\le$16 records, 12k chars};
\node[store] (journal) at (7.0,2.2) {\textbf{journal Source}\\raw dated records};
\node[llm] (consolidate) at (10.3,2.2) {\textbf{consolidate}\\background, once\\per record};
\node[store] (index) at (13.3,2.2) {\textbf{index}\\timelines, values,\\instructions};
\draw[flow] (user) -- (plan);
\draw[flow] (plan) -- (retrieve);
\draw[flow] (retrieve) -- (screen);
\draw[flow] (screen) -- (judge);
\draw[flow] (judge) -- (loop);
\draw[flow] (loop.north) -- node[lab, right] {page} (retrieve.south);
\draw[flow] (loop.north west) -- node[lab, left, pos=0.45] {rewrite} (plan.south east);
\draw[flow] (loop) -- node[lab, above] {stop} (compose);
\draw[flow] (journal) -- (retrieve);
\draw[flow, dashed] (journal) -- (consolidate);
\draw[flow, dashed] (consolidate) -- (index);
\draw[flow] (index.south) |- node[lab, pos=0.25, right, align=left] {items linked\\to the records} (judge.east);
\draw[flow] (compose) -- node[lab, above] {View} (main);
\begin{scope}[on background layer]
\node[draw=rule, rounded corners=4pt, fit=(user)(main)(reply), inner sep=6pt, label={[lab, font=\scriptsize\bfseries]above:{main DSH instance}}] {};
\node[draw=rule, rounded corners=4pt, fit=(plan)(retrieve)(screen)(judge)(loop)(compose)(journal)(consolidate)(index), inner sep=6pt, label={[lab, font=\scriptsize\bfseries]above:{replica DSH instance: \sys}}] {};
\end{scope}
\node[lab, anchor=north west, align=left] at (2.3,-3.75) {\fcolorbox{other}{fillB}{\rule{0pt}{4pt}\rule{4pt}{0pt}} System~2 (LLM) \quad \fcolorbox{ours}{fillA}{\rule{0pt}{4pt}\rule{4pt}{0pt}} System~1 (\jev) \quad \fcolorbox{ink2}{fillC}{\rule{0pt}{4pt}\rule{4pt}{0pt}} storage and search \quad \fcolorbox{ink2}{white}{\rule{0pt}{4pt}\rule{4pt}{0pt}} rules \quad \tikz[baseline=-0.6ex]\draw[flow, dashed] (0,0) -- (0.55,0); off the read path};
\end{tikzpicture}
\caption{\sys{} at one user turn. System~2, an LLM, plans the searches; System~1, \jev, screens and judges the raw records they return and the index items those records link to; rules loop on needs that are still open and publish a View into the main instance, whose LLM answers. Off the read path, System~2 consolidates each record once into an index that points back to the records.}
\label{fig:architecture}
\end{figure}

At every user turn \sys{} decides what the main agent should see before it replies (\cref{fig:architecture}).
It divides the work as \cref{sec:setting} does: an LLM plans the searches and names what the reply needs (System~2), \jev{} judges the pooled records (System~1), rules with explicit budgets connect the two, and slower System~2 work, consolidating the history into an index, runs in the background.
\Cref{sec:memory,sec:read,sec:rules} describe the memory, the two systems that read it and the rules; \cref{alg:turn} summarizes one turn.

\subsection{Memory: Raw Records and a Consolidated Index}
\label{sec:memory}

\paragraph{Records.}
Each session is written as it happened, as short \texttt{speaker: text} chunks, each headed by the session's number and date.
No LLM is on the write path of a record, and nothing about it is decided when it is written; the only computation is one embedding, so the write path is the same whatever the records are about.

\paragraph{Consolidation.}
Questions about a whole conversation, such as every mention of a topic, the current value of something that changed or an instruction given once, need evidence that the question's own searches rarely find.
Gathering it is System~2 work too slow for the read path, so a background job reads the journal once, in order, and a small LLM folds it, batch by batch, into an \emph{index} of three kinds of items, each linked to the records it came from: events on topic timelines; dated histories of values that can change, marked where a value contradicts an earlier one; and standing instructions and preferences.
Items point to records and never replace them: the index is an overlay on the records, not their schema, and \sys{} reads the records with or without it.

\paragraph{Why raw records.}
Keeping records as written separates the work that must be fast and complete from the work that can be slow and fallible.
A record can be searched as soon as it is embedded, so a history of about 100k tokens is written in under a second (\cref{sec:impl}); organizing it runs in the background, can lag or fail without losing evidence, and can be redone from the records.
Memory kept this way grows in three directions without rewriting what is stored: to longer histories, since only search grows with them (\cref{eq:work}); to stronger models, since a better System~2 reads the same records without re-ingesting the history (\cref{sec:reasoning}); and to changes in reading, since new rules and budgets, or a decision model that orders and decides alike (\cref{prop:calibration}), take effect on the whole history at once, and a new kind of index item needs only another background pass.

\subsection{Reading with Two Systems}
\label{sec:read}

\paragraph{Plan (System~2).}
Planning is System~2 work: it must anticipate how the records that answer a question are worded and what the reply will have to say.
Two LLM calls read the last six messages in parallel.
The \emph{search plan} writes three searches in the words a stored record would use, plus one hypothetical record that would answer the message (HyDE~\citep{gao2023hyde}); with the message itself, a turn starts with five searches.
The \emph{need plan} lists at most three information needs, marked \texttt{NEED}, or \texttt{ALL} when the reply needs every instance of something (a count, a total, a list).
Neither call reads the journal, so their cost does not grow with the history.

\paragraph{Retrieve.}
Each search ranks the journal's records by BM25 and by embedding similarity, fuses the two rankings by RRF~\citep{cormack2009rrf}, and returns a page of records.
The pages merge into a \emph{pool}: each search first contributes an equal quota of its top results, and the remaining slots go to the highest summed reciprocal rank.

\paragraph{Judge (System~1).}
Judging is System~1 work: each proposition concerns one record, has an explicit criterion and can be answered independently of the others.
\jev{} answers typed yes/no propositions and returns the probability of yes~\citep{jev2026}.
It \emph{screens} each pooled record (should the reply use this record?), then \emph{judges} the records that could enter the View, jointly and with the last six messages, as \emph{needed} or \emph{no longer current} (a later message or record explicitly changes, cancels or completes it), and fills a \emph{need table}: does this record satisfy this need?
\jev{} then judges which index items the reply needs, among those linked to the judged records and the dozen nearest to the message.
Counting, arithmetic, comparing dates and multi-hop reasoning are System~2 work, which \jev{} handles poorly~\citep{jevdocs2026}; they are left to the planner and the answering model.

\subsection{Rules and Budgets}
\label{sec:rules}

The rules connect the two systems.
They read only what System~1 reliably gives, its yes/no decision ($p \ge 1/2$) and its ranking, together with events observed in the previous round, and they call on System~2 only when System~1 reports trouble: a need that no judged record satisfies prompts the planner for a new search, much as System~2 is mobilized when System~1 runs into difficulty~\citep{kahneman2011thinking}.
Every other constant is an interpretable budget: three needs per turn, pages of 20 records or 12,000 characters, a pool of 48 records, three rounds, and a View of 16 records and 12,000 characters.
Because the rules use judgments only through their order and their decisions, they do not depend on how the decision model is calibrated.

\begin{proposition}[Calibration invariance]
\label{prop:calibration}
Let $h$ be a strictly increasing map of $[0, 1]$ onto itself with $h(1/2) = 1/2$.
Replacing every judgment $p$ by $h(p)$ leaves every action of the loop, and hence the View, unchanged.
\end{proposition}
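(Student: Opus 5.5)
The plan is to show that the loop reads judgments only through two kinds of statistic, thresholded decisions $\mathbf{1}[p \ge 1/2]$ and comparisons $p < p'$, and that both are invariant under $h$. I would then argue by induction over the rounds of the loop that the whole trajectory of actions is identical under $p$ and under $h(p)$. This gives the View as a corollary.

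\emph{Step 1 (invariance of the primitives).} Since $h$ is strictly increasing, $p < p'$ holds if and only if $h(p) < h(p')$, and $p = p'$ holds if and only if $h(p) = h(p')$. Hence every ranking of items by judgment, including how ties are placed, is the same under $p$ and $h(p)$. Since $h$ is strictly increasing and $h(1/2)=1/2$, we have $p \ge 1/2$ if and only if $h(p) \ge h(1/2) = 1/2$. So every yes/no decision is preserved. Surjectivity onto $[0,1]$ is not strictly needed; it only ensures that $h(p)$ is again a valid probability.

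\emph{Step 2 (the rules as functions of the primitives).} Following \cref{sec:rules}, I would list each rule and check that its inputs are decisions, orders, budgets and events from the previous round. These rules are: the screen that admits records to judging; the needed and no-longer-current judgments that decide entry to the View; the need table, where an open need is one that no judged record satisfies; the choice among page, rewrite and stop; the selection of index items; and the truncation to 16 records and 12,000 characters in rank order. Each event (which needs are open, which records were admitted) is itself a function of decisions and orders. No rule uses a sum, threshold other than $1/2$, or difference of probabilities.

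\emph{Step 3 (induction).} The System~2 calls and retrieval do not read judgments. They read only $M$, the journal and the previous round's actions. So if rounds $1,\dots,k$ produced the same actions under both judgment maps, round $k+1$ receives the same pool and the same propositions. By Step 2 it then takes the same actions. The base case is the plan and pool, which involve no judgments at all. Since there are at most three rounds, the full action sequence, and hence the composed View, coincides.

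The main obstacle is Step 2. It is not a calculation but an audit of the rules: one must confirm that nothing in the pool merge, the View composition or tie-breaking combines judgment values numerically. For example, ordering records by a sum of screen and need probabilities would break the invariance. I would handle this by stating precisely, as an assumption matching the text of \cref{sec:rules}, that each rule is a function of the decision vector, the order on judgments within each proposition, and the prior events. The proposition then follows from Steps 1 and 3.
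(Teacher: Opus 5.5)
Your proposal is correct and takes the paper's approach: your Step~1 is the paper's whole proof, namely that the rules compare judgments only with each other or with $1/2$, and a strictly increasing $h$ with $h(1/2)=1/2$ preserves both comparisons. Steps~2 and~3 only spell out the audit of the rules and the propagation through the loop that the one-line proof takes for granted. The induction is cleanest over waves rather than rounds, since within a round the propositions posed in later waves depend on earlier decisions.
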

\begin{proof}
The rules compare judgments only with each other or with $1/2$, and $h$ preserves both comparisons.
\end{proof}

\noindent A decision model can thus be replaced by another that orders and decides alike, without retuning any rule.

\paragraph{The loop.}
Each open need gets one action per round.
A single need is met once a record satisfies it.
A need for every instance, labeled \texttt{ALL} or observed because two records satisfy it, pages each search that found a new positive until none does.
A need that no record satisfies yet follows its \emph{lead}, the record the need table ranks highest: the loop pages the search that ranked the lead highest, then asks the planner for one new search, then drops the need.

\paragraph{The View.}
The View leads with the index, each block within a fixed budget: standing instructions, a one-line directory of topics, and the timelines and value histories \jev{} judged needed.
Records follow: those judged needed, then, when the planner named a need, the rest in \jev's ranking up to the View's budget, trimmed from the end to make room for the index.
Records judged no longer current are left out.

\paragraph{Work per turn.}
With these budgets, the work on a turn's critical path does not grow with the history, except in search.
A turn that makes $R$ sequential System~2 calls (the plan, any new search the loop asks for, and the answer), $W$ waves of judgments and $S$ reads of the journal takes
\begin{equation}
\label{eq:work}
T = \sum_{i=1}^{R} \tau^{(2)}_i + \sum_{w=1}^{W} \tau^{(1)}_w + \sum_{s=1}^{S} \tau^{\mathrm{read}}_s(n),
\end{equation}
where $\tau^{(2)}_i$, $\tau^{(1)}_w$ and $\tau^{\mathrm{read}}_s(n)$ are the latencies of the $i$th System~2 call, the $w$th wave and the $s$th read, each read being over the $n$ records of the journal.
The rounds and the needs bound $R$, $W$ and $S$, and only reads depend on $n$; since reads spend no model tokens, the tokens a turn spends are bounded as well.

\begin{algorithm}[t]
\caption{One user turn of \sys.}
\label{alg:turn}
\small
\begin{algorithmic}[1]
\Require the last six messages $M$, the journal $J$ and its index $X$
\State $(Q, N) \gets \textsc{Plan}(M)$ \Comment{System~2: five searches, at most three needs}
\State $P \gets \textsc{Pool}(\{\textsc{Search}(J, q) : q \in Q\})$ \Comment{BM25 and embeddings, fused by RRF; at most 48 records}
\For{round $= 1, 2, 3$}
  \State $\textsc{Screen}(P)$;\; $\textsc{Judge}(P, M, N)$ \Comment{System~1: use it? needed? no longer current? which need?}
  \State $A \gets \textsc{Act}(N, P)$ \Comment{rules: page a search, ask System~2 for a new one, or drop the need}
  \If{$A = \emptyset$} \State \textbf{break} \EndIf
  \State $P \gets P \cup \textsc{Run}(A)$
\EndFor
\State $I \gets \textsc{JudgeItems}(X, P, M)$ \Comment{index items linked to judged records, and the nearest}
\State \Return $\textsc{Compose}(I, P)$ \Comment{index first, then records, within the View's budgets}
\end{algorithmic}
\end{algorithm}

%% file: sections/implementation.tex
\section{Implementation}
\label{sec:impl}

\sys{} runs as a \emph{replica}: a second DSH instance with the same Sources, its own Strategy and a control loop that calls \jev.
At each user turn the replica runs one job and publishes its View into the main instance.
Upstream DSH is unmodified, and apart from 115 changed lines of dsh-mnemon~\citep{mnemon2026code} the replica consists of new plugins, released with this paper~\citep{mnemon2026artifact}.

\paragraph{Journal.}
Records are chunks of up to about 600 characters, embedded locally with nomic-embed-text~\citep{nussbaum2024nomic}; a \lme{} history of about 106k tokens becomes roughly 1,000 records, written in under a second.
For BEAM-10M, the journal is sized for up to 200,000 records per conversation and keeps its search index between searches.

\paragraph{Consolidation.}
Background jobs read the journal from a watermark, in batches of about 10,000 characters that hold user messages whole and the first 240 characters of each assistant message, about 15\% of a conversation's text.
DeepSeek-V4.1-Flash without thinking~\citep{deepseekflash2026} folds each batch into the index.
For very long histories a fold sees at most 120 topics, 240 values and 60 instructions, half chosen by the words they share with the batch and half by recency.
Index items are embedded as they are written, so that no View waits for them.

\paragraph{Models.}
System~2 is an LLM: the planner and the answering model are the same LLM, gpt-4.1-mini or DeepSeek-V4.1-Flash (\cref{sec:setup}).
System~1 is \jev{} 1.13, which judges up to 40 records per call and answers every proposition about them in that call (\cref{sec:system1}).

%% file: sections/evaluation.tex
\section{Evaluation}
\label{sec:eval}

We evaluate \sys{} on four public benchmarks and ask six questions.
How accurate is it at small context, compared with systems re-evaluated under one protocol (\cref{sec:main})?
What does a stronger System~2 add, and how does the result compare with the best published results (\cref{sec:reasoning})?
Does its cost stay bounded as the history grows to 10M tokens (\cref{sec:scale})?
Is judging better done by System~1 than by an LLM (\cref{sec:system1})?
Is the same System~1 better used to judge raw records at read time than to organize memory at write time (\cref{sec:jevmem})?
And what does consolidation add (\cref{sec:consolidation})?

%% file: sections/protocol.tex
\subsection{Setup}
\label{sec:setup}

\paragraph{Benchmarks.}
LoCoMo~\citep{maharana2024locomo} has 10 conversations of 13--25k tokens and 1,540 non-adversarial questions.
\lme~\citep{wu2025longmemeval,longmemeval_cleaned} has 500 questions, each with its own history of about 106k tokens.
HaluMem~\citep{halumem2025} (Medium set) has 20 users and 3,467 questions.
BEAM~\citep{beam2026} has two tiers: BEAM-100K, with 20 conversations and 400 questions over 10 abilities, and BEAM-10M, with 10 conversations of about 90,000 records each and 200 questions.
LoCoMo is reported with its original labels; for comparison with published claims we also give our score under our revised labels, which drop 44 unusable questions and correct 25 gold answers.

\paragraph{Models.}
In the \emph{standard setting}, gpt-4.1-mini~\citep{openai2025gpt41} (snapshot 2025-04-14, temperature 0) answers and plans, as in OmniMemEval~\citep{omnimemeval2026}.
In the \emph{reasoning setting}, DeepSeek-V4.1-Flash~\citep{deepseekflash2026} answers with thinking and plans without.

\paragraph{Grading.}
LoCoMo answers are graded with the lenient prompt of Mem0~\citep{chhikara2025mem0} and later work, \lme{} answers with its per-type prompts, and HaluMem and BEAM answers with their official prompts and rubrics.
The primary grader is gpt-4.1-mini in the standard setting and DeepSeek-V4.1-Flash (without thinking, temperature 0) in the reasoning setting, and the other grades every answer as well; on LoCoMo and \lme{} the two agree on 96--97\% of answers (Cohen's $\kappa$ 0.85--0.89).
OmniMemEval grades with gpt-4o-mini, so comparisons with it carry a grader difference of 1--2 points.
Paired differences between configurations are reported with 95\% confidence intervals (CIs).

\paragraph{Baselines.}
We compare \sys{} with the 14 systems that OmniMemEval re-evaluated with gpt-4.1-mini answering, using its published numbers, and, separately, with the best result each open-source project has published, whatever its model and protocol.

\paragraph{Cost accounting.}
Costs follow the token counts the APIs return, at list prices per million tokens (gpt-4.1-mini \$0.40 input, \$0.10 cached, \$1.60 output; DeepSeek-V4.1-Flash off-peak \$0.15, \$0.003, \$0.60; \jev{} \$0.042 input), and cover the answer, the planner and \jev; consolidation is a one-time cost per history.
Questions that failed on a transient error or received an empty View were answered once more.

%% file: sections/results.tex
\subsection{Accuracy at Small Context}
\label{sec:main}

\Cref{tab:omnimemeval} places \sys{} among the 14 systems that OmniMemEval re-evaluated with the same answering model.
On LoCoMo it scores \finLoCoMo\%, against 88.83\% for MemOS and at most 83.48\% for the rest, a margin beyond the 1--2 points that separate graders; on \lme{} it scores \finLME\%, second to MemOS (89.2\%) and ahead of EverMemOS (80.4\%) and Zep (79.8\%).
It sends the answering model about \finCtxLoCoMo{} tokens per question, less than any other system above 80\% on either benchmark (MemOS 4.2--5.4k, EverMemOS 8.6--12.4k, Hindsight 24.7k, Cognee 32.5k).
The context is small because System~1 does the broad reading: per question, \jev{} reads \jevTokRange{} tokens of records, \jevRatioRange{} times what the answering model reads, at about a tenth of its price per token, and passes on only what the reply needs.
With $c_\text{full}$ set to 21,613 gpt-4.1-mini input tokens on LoCoMo and 105,134 on \lme, \sys{} has the lowest effective cost index (\cref{eq:eci}) of the 15 systems on LoCoMo, \finEciLoCoMo{} against \eciOtherLoCoMo{} for \eciOtherNameLoCoMo, and the second lowest on \lme, \finEciLME{} after \eciOtherNameLME{} at \eciOtherLME{} (\cref{fig:tradeoff}).

\input{tables/omnimemeval}

By question type (\cref{tab:by-type}), \sys{} leads on every LoCoMo type except open-domain, whose questions ask for inferences the conversation does not state, with the largest margins on temporal and multi-hop questions.
On \lme{} it leads on knowledge updates, matches MemOS on multi-session questions and trails it mainly on temporal-reasoning and preference questions, which require computing or inferring at answer time.

\input{tables/by_type}

\subsection{A Stronger System~2}
\label{sec:reasoning}

System~2 can be replaced without changing the memory.
With DeepSeek-V4.1-Flash planning and answering, with thinking when it answers, \sys{} scores \dsFinLoCoMo\% on LoCoMo and \dsFinLME\% on \lme{} under the DeepSeek grader (\dsFinLoCoMoMini\% and \dsFinLMEMini\% under gpt-4.1-mini), at \$\dsCostLoCoMo{} and \$\dsCostLME{} per 1,000 questions.
On \lme{} the stronger System~2 adds \readerGainLME{} points on the same memory under the gpt-4.1-mini grader (95\% CI \readerGainLMECI), most on temporal-reasoning and multi-session questions, whose computing and counting are System~2 work; with it, \sys{} reaches or passes MemOS on five of the six \lme{} question types (\cref{tab:by-type}). Raw records leave computing and counting to the answering model, and a stronger one does them.
\Cref{tab:best-reported} lists each open-source project's best published result, whatever its answering model and protocol.
On \lme{} \sys{} is on par with the best claims (Hindsight 94.6\%, Mem0 94.4\% with gpt-5); on LoCoMo, where claims reach 94.7\% (Zep with gpt-5.4), it scores \dsFinLoCoMo\% on the original labels and \dsFinLoCoMoRev\% on the revised ones.

\input{tables/best_reported}

\subsection{Other Benchmarks and Scale}
\label{sec:scale}

\Cref{tab:ours} reports \sys{} on every benchmark and tier with its full cost per question.
HaluMem and BEAM test what the design targeted least: HaluMem's grader counts an answer that adds details beyond the reference as a hallucination, and BEAM asks for instructions to be followed, contradictions to be reported and whole conversations to be summarized.
On them, \sys{} ranks \finHaluRankWord{} of \haluSystems{} systems on HaluMem and \finBeamRankWord{} of \beamSystems{} on BEAM-100K.

\input{tables/ours}

Nothing on the read path of \sys{} grows with the history except the search index: the planner reads the recent dialogue, \jev{} screens at most 48 records a round, and the View has fixed budgets.
BEAM-10M tests this with conversations of about 90,000 records, 80 times as many as BEAM-100K's.
From BEAM-100K to BEAM-10M the cost per question grows by a factor of \beamTenCostRatioC{}, and the work on a question's critical path stays about the same.
On BEAM-10M \sys{} scores \beamTenC, against 43.3--59.8 for the 11 systems OmniMemEval reports, and consolidating its 10 conversations took \beamTenBatches{} folds without a failure.

We report latency as critical-path work (\cref{eq:work}, \cref{tab:ours}).
For the median question, System~2 makes two sequential calls, the plan (two requests in parallel) and the answer, and a third in the \workReplan\% of questions whose loop asks for a new search; System~1 makes \workJevCalls{} calls in \workJevWaves{} waves, \workJevTime\,s at \workWave\,s a wave (\cref{fig:system1}b).
A warm search takes \workSearchSmall\,ms, mostly to embed the query, and a question's reads \workReadSmall\,s, but \workSearchTen\,ms and \workReadTen\,s on BEAM-10M's largest history (\workTenRecords{} records), because our search scores every record; inverted and approximate nearest-neighbor indexes would avoid this.

%% file: tables/omnimemeval.tex
\begin{gentable}[!htb]
\centering
\small
\caption{\sys{} and the 14 systems re-evaluated by OmniMemEval~\citep{omnimemeval2026}, all with gpt-4.1-mini answering: accuracy, context sent to the answering model per question, and effective cost index (ECI, \cref{eq:eci}; lower is better). OmniMemEval grades with gpt-4o-mini and we with gpt-4.1-mini; with DeepSeek as grader, \sys{} scores 91.4\% and 85.4\%. Best per column in bold.}
\label{tab:omnimemeval}
\setlength{\tabcolsep}{5pt}
\begin{tabular}{@{}l S[table-format=2.2] S[table-format=3.1] S[table-format=1.3] S[table-format=2.2] S[table-format=3.1] S[table-format=1.3]@{}}
\toprule
& \multicolumn{3}{c}{LoCoMo (1,540 questions)} & \multicolumn{3}{c}{\lme{} (500 questions)} \\
\cmidrule(lr){2-4}\cmidrule(l){5-7}
& {accuracy} & {context} & {ECI} & {accuracy} & {context} & {ECI} \\
system & {(\%)} & {(k tokens)} & & {(\%)} & {(k tokens)} & \\
\midrule
\sys{} & \bfseries 91.7 & 3.8 & \bfseries 0.259 & 83.8 & 3.8 & 0.198 \\
\midrule
MemOS & 88.83 & 5.4 & 0.362 & \bfseries 89.2 & 4.2 & \bfseries 0.147 \\
Cognee & 83.48 & 32.5 & 1.670 & 51.8 & 10.3 & 0.580 \\
EverMemOS & 82.75 & 8.6 & 0.569 & 80.4 & 12.4 & 0.314 \\
Hindsight & 81.99 & 24.7 & 1.322 & 72.2 & 29.8 & 0.561 \\
Mem0 & 77.68 & 17.4 & 1.028 & 56.0 & 0.9 & 0.448 \\
Letta & 77.12 & 14.2 & 0.885 & 77.67 & 49.4 & 0.693 \\
MemMachine & 73.9 & 2.6 & 0.380 & 63.6 & 2.8 & 0.391 \\
mem9 & 73.64 & 1.6 & 0.337 & 78.0 & 3.8 & 0.256 \\
Supermemory & 73.53 & 15.2 & 0.970 & 66.07 & 6.6 & 0.402 \\
MemoryLake & 72.49 & 5.2 & 0.516 & {--} & {--} & {--} \\
Viking Memory & 69.33 & 6.0 & 0.583 & 61.07 & 2.3 & 0.411 \\
Zep & 63.83 & 1.9 & 0.448 & 79.8 & 117.1 & 1.316 \\
Memori & 41.34 & 8.1 & 0.963 & 20.8 & 2.8 & 0.818 \\
Backboard.io & 22.4 & 1.2 & 0.831 & {--} & {--} & {--} \\
\bottomrule
\end{tabular}
\end{gentable}

%% file: tables/by_type.tex
\begin{gentable}[!htb]
\centering
\small
\caption{Accuracy (\%) by question type. \sys{} is graded by gpt-4.1-mini in the standard setting (gpt-4.1-mini as System~2) and the reasoning setting (DeepSeek-V4.1-Flash); MemOS and EverMemOS, the strongest systems OmniMemEval re-evaluated, as it reports them (gpt-4.1-mini answering, gpt-4o-mini grading). Bold: best of the three systems with gpt-4.1-mini answering.}
\label{tab:by-type}
\setlength{\tabcolsep}{5pt}
\begin{tabular}{@{}l r S[table-format=2.1] S[table-format=3.2] S[table-format=2.2] S[table-format=3.1]@{}}
\toprule
& & \multicolumn{3}{c}{gpt-4.1-mini answering} & {reasoning} \\
\cmidrule(lr){3-5}\cmidrule(l){6-6}
question type & questions & {\sys} & {MemOS} & {EverMemOS} & {\sys} \\
\midrule
\multicolumn{6}{@{}l}{\textit{LoCoMo}} \\
\quad single-hop & 841 & \bfseries 94.6 & 92.51 & 86.8 & 96.1 \\
\quad multi-hop & 282 & \bfseries 91.8 & 88.65 & 77.78 & 92.2 \\
\quad temporal & 321 & \bfseries 91.3 & 85.05 & 84.11 & 92.2 \\
\quad open-domain & 96 & 66.7 & \bfseries 69.79 & 57.29 & 67.7 \\
\addlinespace[2pt]
\multicolumn{6}{@{}l}{\textit{\lme}} \\
\quad single-session, user & 70 & 95.7 & \bfseries 100.0 & 91.43 & 97.1 \\
\quad single-session, assistant & 56 & 94.6 & \bfseries 100.0 & 89.29 & 100.0 \\
\quad single-session, preference & 30 & 76.7 & \bfseries 100.0 & 96.67 & 100.0 \\
\quad temporal reasoning & 133 & 76.7 & \bfseries 89.47 & 81.95 & 92.5 \\
\quad multi-session & 133 & 78.2 & \bfseries 78.95 & 66.17 & 88.0 \\
\quad knowledge update & 78 & \bfseries 89.7 & 84.62 & 79.49 & 93.6 \\
\bottomrule
\end{tabular}
\end{gentable}

%% file: tables/best_reported.tex
\begin{gentable}[!htb]
\centering
\scriptsize
\caption{Each open-source project's best published result, with whatever answering model, grader and protocol it used (checked September 26--29, 2026). The settings differ widely, so the table ranks claims, not systems. $^\dagger$On the revised LoCoMo labels (\cref{sec:setup}); 92.2 on the original labels, which every other entry uses. $^\ddagger$Including the adversarial category.}
\label{tab:best-reported}
\setlength{\tabcolsep}{3pt}
\begin{tabularx}{\linewidth}{@{}l S[table-format=2.2] S[table-format=2.1] >{\raggedright\arraybackslash}p{4.7cm}>{\raggedright\arraybackslash}X@{}}
\toprule
project & {LoCoMo} & {\lme} & answering model / grader & method \\
\midrule
\textbf{\sys} & 95.3{$^\dagger$} & 94.4 & DeepSeek-V4.1-Flash (thinking) / DeepSeek & raw records and a consolidated index, LLM-planned search, \jev{} judging \\
Zep / Graphiti~\citep{zeprepo} & 94.7 & 90.2 & gpt-5.4 (medium reasoning) / gpt-5.4 & temporal knowledge graph + reranking \\
EverMemOS~\citep{evermemos2026} & 93.05 & 83.0 & gpt-4.1-mini / three graders averaged & MemCell to MemScene \\
Mem0~\citep{mem0repo} & 92.5 & 94.4 & gpt-5 / gpt-5 & LLM-extracted facts, top 200 (about 7k tokens) \\
memU~\citep{memu2026} & 92.09 & {--} & not stated (early version) & -- \\
Hindsight~\citep{hindsightrepo,hindsight2025} & 92.0 & 94.6 & not stated (paper: gemini-3-pro 89.6 / 91.4) & four memory networks, 36--44k tokens of context \\
MemMachine~\citep{memmachine2026} & 91.69 & 93.0 & gpt-4.1-mini; \lme{} gpt-5-mini / gpt-4o-mini & raw episodes indexed by sentence \\
MemOS~\citep{omnimemeval2026} & 88.83 & 89.2 & gpt-4.1-mini / gpt-4o-mini (OmniMemEval) & MemCube tree / graph memory \\
Memori~\citep{memori2026} & 87.0 & {--} & gpt-4.1-mini / gpt-4.1-mini & LLM-extracted triples + summaries \\
mem9~\citep{mem9_2026} & 86.85 & {--} & qwen3.6-plus / qwen3.6-plus & keyword + vector retrieval \\
MIRIX~\citep{wang2025mirix} & 85.38 & {--} & gpt-4.1-mini / gpt-4.1 & multi-agent, six memory types \\
Nemori~\citep{nan2025nemori} & 83.05 & 74.6 & gpt-4.1-mini / not stated & episodic narratives + distilled facts \\
OpenViking~\citep{openviking2026} & 82.86 & {--} & Doubao 2.0 Pro / Doubao & file-system context store, three tiers \\
Jev-Mem~\citep{jiang2026jevmem} & 77.7{$^\ddagger$} & {--} & gpt-4o-mini / not stated & \jev{}-typed turns in a relation graph built at write time \\
Memobase~\citep{memobase2025} & 75.78 & {--} & not stated & user profile + event timeline \\
Letta~\citep{letta2024} & 74.0 & {--} & gpt-4o-mini / gpt-4.1 & agent searches raw conversation files \\
LightMem~\citep{fang2025lightmem} & 72.99 & 73.2 & gpt-4o-mini; \lme{} glm-4.6 / not stated & pre-compression + offline update \\
Supermemory~\citep{supermemory2026} & {--} & 85.2 & gemini-3-pro / not stated & -- \\
SimpleMem~\citep{simplemem2026} & {--} & 84.4 & gpt-4.1 / gpt-4.1-mini & atomic memory units \\
Engram~\citep{wang2026engram} & {--} & 83.6 & doubao-seed-2.0-pro / deepseek-v3.2 & bi-temporal knowledge graph from asynchronous extraction \\
\bottomrule
\end{tabularx}
\end{gentable}

%% file: tables/ours.tex
\begin{gentable}[!htb]
\centering
\small
\caption{\sys{} on each benchmark and tier with gpt-4.1-mini answering: score under each grader (accuracy; HaluMem: share correct; BEAM: rubric score), rank among the systems OmniMemEval re-evaluated, context and cost per question, one-time consolidation cost per history, and the median question's \jev{} calls (sequential waves) and searches, with the warm latency of one search on the largest history.}
\label{tab:ours}
\setlength{\tabcolsep}{4pt}
\begin{tabular}{@{}l S[table-format=2.1] S[table-format=2.1] c S[table-format=1.1] S[table-format=1.2] S[table-format=1.3] c S[table-format=2.0] S[table-format=3.0]@{}}
\toprule
& \multicolumn{2}{c}{score, grader} & & {context} & \multicolumn{2}{c}{cost (\$)} & \multicolumn{3}{c}{median question} \\
\cmidrule(lr){2-3}\cmidrule(lr){6-7}\cmidrule(l){8-10}
benchmark & {gpt-4.1-mini} & {DeepSeek} & rank & {(k tokens)} & {/1k q.} & {/history} & \jev{} calls (waves) & {searches} & {search (ms)} \\
\midrule
LoCoMo & 91.7 & 91.4 & 1/15 & 3.8 & 3.27 & 0.013 & 5 (4) & 7 & 14 \\
\lme & 83.8 & 85.4 & 2/13 & 3.8 & 3.33 & 0.017 & 5 (5) & 7 & 14 \\
HaluMem & 73.3 & 65.8 & 8/13 & 3.4 & 3.60 & 0.089 & 7 (6) & 10 & 18 \\
BEAM-100K & 64.5 & 60.5 & 10/12 & 3.8 & 4.80 & 0.012 & 9 (7) & 13 & 17 \\
BEAM-10M & 51.2 & 48.8 & 10/12 & 3.8 & 5.32 & 1.62 & 10 (7) & 14 & 248 \\
\bottomrule
\end{tabular}
\end{gentable}

%% file: sections/analysis.tex
\subsection{System~1 Against LLMs}
\label{sec:system1}

The division gives judging to System~1.
To test it, we gave gpt-4.1-mini and DeepSeek the proposition \jev{} answers, with the same dialogue and records, for 599 LoCoMo and \lme{} questions (14,359 records, 1,004 of them gold evidence).
\jev{} separates gold evidence best (\cref{fig:system1}): AUC 0.942 overall against 0.900 for DeepSeek and 0.853 for gpt-4.1-mini, and 0.939 against 0.872 and 0.798 on \lme, whose chat-turn records rarely repeat the question's words.
A \jev{} call on 24 records takes 0.34\,s while answering two propositions per record; the LLMs take 1.05--3.74\,s for one (\cref{fig:system1}b).
System~1 work is thus done better, and 3--11 times faster, by a decision model: over the \workJevWaves{} waves of a question (\cref{tab:ours}), judging with an LLM would add seconds and cost while separating evidence less well.

\begin{figure}[!htb]
\centering
\includegraphics[width=\linewidth]{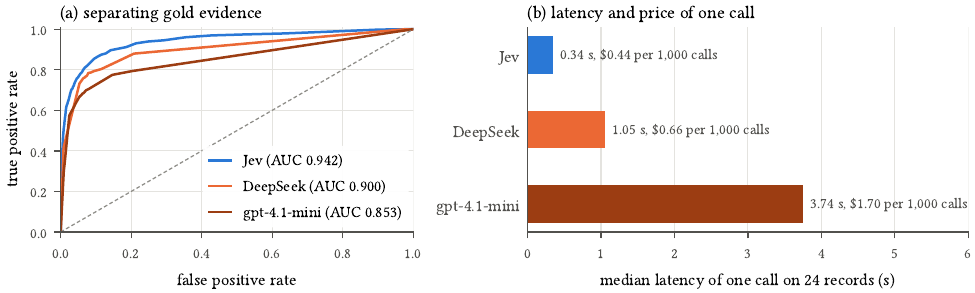}
\caption{\jev{} and two LLMs on the same 14,359 records: (a) ROC curves for separating gold evidence, pooled over records; (b) median latency and list price of one call on 24 records, in which \jev{} also answers whether each is no longer current.}
\label{fig:system1}
\end{figure}

\subsection{Judging Raw Records or Organizing Them}
\label{sec:jevmem}

The design can also be tested against a system that uses the same System~1 at write time.
Jev-Mem~\citep{jiang2026jevmem}, concurrent work, gives \jev{} the decisions of memory management: when a turn is written, \jev{} types it and judges its relations to earlier turns, building a graph with semantic, temporal, causal and entity edges; when a question arrives, \jev{} routes it over the graph, allocates the search budget and decides when to stop, and an LLM writes the answer.
We ran its released code (default profile, \jev{} 1.13 as in our runs) under our protocol: the same 1,540 LoCoMo questions, gpt-4.1-mini answering once per question from the question alone, and both graders.
Its released runner, by default, chooses among three answers by comparing them with the gold answer and picks its answer prompt by the question's category; we did neither.

Jev-Mem scores \jmAcc\% (\jmAccDS\% under DeepSeek), \jmDiff{} points below \sys{} (95\% CI \jmCI; \jmUp{} questions answered correctly only by \sys, \jmDown{} only by Jev-Mem), with the largest gaps on multi-hop (\jmMultiHop{} points) and temporal (\jmTemporal) questions, which need records connected or dated; given each question's category, it scores \jmLabels\%.
It sends less context (\jmCtx{} tokens; ECI \jmEci{} against \finEciLoCoMo) and does less work per question, one LLM call and \jmJevCalls{} \jev{} calls on average, but it spends \jmWriteRatio{} times as much per history at write time (\$\jmWrite{} against \$\writeLoCoMo), where \jev{} judges every turn against its candidates.
The two systems differ in more than where System~1 works, so this is not an ablation; but with the decision model held fixed, judging raw records once the question is known was more accurate than organizing them before it was.

\subsection{What Consolidation Adds}
\label{sec:consolidation}

Consolidation is the System~2 work that \sys{} moves off the read path, and answering the same questions with the index removed measures what it adds.
With gpt-4.1-mini answering it adds \consLoCoMoAbs{} points on LoCoMo, \consLMEAbs{} on \lme, \consBeamAbs{} on BEAM-100K and \consBeamTenAbs{} on BEAM-10M, and \dsConsLoCoMoAbs{} on LoCoMo and \dsConsLMEAbs{} on \lme{} with the reasoning model as System~2, each with a 95\% CI above zero; on HaluMem the two graders disagree in sign.
The gains fall where a question does not point to its evidence: instruction following, knowledge updates, summarization and abstention on BEAM; knowledge updates and multi-session questions on \lme; and multi-hop and temporal questions on LoCoMo, whose facts the timelines connect and date.
Consolidation multiplies the cost per question by \consCostRange, mostly for \jev{} to judge the index items.

%% file: sections/limitations.tex
\section{Limitations}
\label{sec:limitations}

\paragraph{Development and evaluation.}
\sys{} was developed on LoCoMo and \lme, and consolidation was designed after we had seen results on HaluMem and BEAM-100K, so none of the benchmarks is held out.
Our runs are graded by gpt-4.1-mini and DeepSeek and OmniMemEval's by gpt-4o-mini; graders differ by 1--2 points, and on HaluMem by more.
Each configuration ran once per setting, and two identical runs of an earlier configuration differed by up to 2.6 points.
We evaluate only conversational memory held in one Source; stores of documents, tasks or logs, which the design admits, are untested.

\paragraph{Dependence on \jev.}
\sys{} relies on \jev, a hosted, proprietary model; LLMs can stand in with lower discrimination at several times the latency and price, but no substitute was tested in the full system.

\paragraph{Latency and privacy.}
Our runs shared one laptop and public model APIs; a question took \workMeasured\,s end to end at the median, mostly in model calls.
Raw records keep everything an extractor would discard, and the index repeats some of it, so deletion and retention policies must cover both.

%% file: sections/conclusion.tex
\section{Conclusion}
\label{sec:conclusion}

\sys{} divides the work of memory between a slow System~2, an LLM that writes a few search queries, names what a reply needs and composes the answer, and a fast System~1, a decision model that makes many small, explicit judgments about the records those searches find; slow work that no reply can wait for, consolidating each record once, runs in the background.
Because nothing about a record is decided when it is written, the same agent can read any store that returns dated records, a general way to give agents a past beyond conversation.
With a small answering model, \sys{} is the most accurate of the systems in a public re-evaluation on LoCoMo and second on \lme, from under 4k tokens of context per question and with the lowest effective cost index on LoCoMo; with a reasoning model as System~2 it is on par with the best published results on \lme; and its cost per question stays nearly flat from 100K to 10M tokens of history.
Most of the read-time work of memory is System~1 work, which a decision model does better, faster and more cheaply than a general LLM; judging raw records once the question is known was also more accurate than using the same model to organize them in advance, and raw records let the same memory serve longer histories and stronger answering models without being rewritten.
Code, prompts, analysis scripts and run records (HaluMem's excepted) are available in the Mnemon repository~\citep{mnemon2026artifact}.